\newcommand{\removelatexerror}{\let\@latex@error\@gobble}
\DeclarePairedDelimiter{\norm}{\lVert}{\rVert}
\newtcolorbox{incomplete}{skin=enhancedmiddle jigsaw,breakable,
  parbox=false,boxrule=0mm,leftrule=2mm,boxsep=0mm,
  arc=0mm,outer arc=0mm,left=3mm,right=3mm,top=1mm,bottom=1mm,
  toptitle=1mm,bottomtitle=1mm,oversize,colback=yellow!5!white,
  colframe=yellow}
\newcommandx{\EB}[2][1=]{\todo[linecolor=red,backgroundcolor=red!25,bordercolor=red,#1]{\textbf{EB:} #2}}
\newcommandx{\LT}[2][1=]{\todo[linecolor=blue,backgroundcolor=blue!25,bordercolor=blue,#1]{\textbf{LT:} #2}}
\newcommand*{\field}[1]{\mathbb{\MakeUppercase{#1}}}		
\newcommand*{\set}[1]{{\mathcal{\MakeUppercase{#1}}}}			
\newcommand*{\R}{\field{R}} 
\renewcommand{\vec}[1]{{\boldsymbol{\mathbf{#1}}}}
\newcommand*{\mat}[1]{\vec{\MakeUppercase{#1}}}
\newcommand*{\transpose}{\mathsf{T}}
\newcommand*{\dimension}{D}
\newcommand*{\expectation}{\mathbb{E}}
\newcommand*{\normal}{\mathcal{N}}					
\newcommand*{\vecMean}[1]{\vec{\hat{#1}}}   
\newcommand*{\observation}{y}
\newcommand*{\observations}{\vec{y}}
\newcommand*{\weights}{\vec{w}}         
\newcommand*{\ffeature}{\vec{\phi}}     
\newcommand*{\warp}{\vec{m}}
\newcommand*{\tfVec}{\vec{g}}
\newcommand*{\tfMat}{\mat{G}}
\newcommand*{\tfAdd}{\vec{h}}
\newcommand*{\InputSpace}{\set{X}}
\newcommand*{\WarpedSpace}{\set{Q}}
\newcommand*{\LinearOpSpace}{\set{L}}
\newcommand*{\upperGP}{u}
\newcommand*{\rawInput}{\vec{x}}
\newcommand*{\randomInput}{\vec{\tilde{x}}}
\newcommand*{\nObs}{N}
\newcommand*{\nFeatures}{M}
\title{Sparse Spectrum Warped Input Measures for Nonstationary Kernel Learning}
\author{%
  Anthony Tompkins$^1$\\
  \And
  Rafael Oliveira$^{1,2}$ \\
  \And
  Fabio Ramos$^{1,3}$ \\
  \AND
  $^1$ \textnormal{School of Computer Science, the University of Sydney, Australia}\\
  $^2$ \textnormal{ARC Centre for Data Analytics for Resources and Environments, Australia}\\
  $^3$ \textnormal{NVIDIA, USA}\\
}
\begin{document}

\maketitle

\begin{abstract}
  We establish a general form of explicit, input-dependent, measure-valued warpings for learning nonstationary kernels. While stationary kernels are ubiquitous and simple to use, they struggle to adapt to functions that vary in smoothness with respect to the input. The proposed learning algorithm warps inputs as conditional Gaussian measures that control the smoothness of a standard stationary kernel. This construction allows us to capture non-stationary patterns in the data and provides intuitive inductive bias. The resulting method is based on sparse spectrum Gaussian processes, enabling closed-form solutions, and is extensible to a stacked construction to capture more complex patterns. The method is extensively validated alongside related algorithms on synthetic and real world datasets. We demonstrate a remarkable efficiency in the number of parameters of the warping functions in learning problems with both small and large data regimes.
\end{abstract}

\section{Introduction}
Many interesting real world phenomena exhibit varying characteristics, such as smoothness, across their domain. Simpler phenomena that \textit{do not} exhibit such variation may be called \textit{stationary}. The typical kernel based learner canonically relies on a \textit{stationary} kernel function, a measure of "similarity", to define the prior beliefs over the function space. Such a kernel, however, cannot represent desirable \textit{nonstationary} nuances, like varying spatial smoothness and sudden discontinuities. Restrictive stationary assumptions do not generally hold and limit applicability to interesting problems, such as robotic control and reinforcement learning \cite{ng2006autonomous}, spatial mapping \cite{ton2018spatial}, genetics \cite{friedman2000using}, and Bayesian optimisation \cite{martinez2017bayesian}. One obvious way to alleviate the problem of finding the appropriate kernel function given one's data is hyperparameter optimisation. However for a GP with stationary kernel, even if the \textit{optimal} set of hyperparameters were found, it would be insufficient if our underlying response were nonstationary with respect to the observed inputs. 

In this paper we propose a method for nonstationary kernel learning, based on sparse spectral kernel representations. Our method is linear in complexity with respect to the number of data points and is simultaneously able to extract nonstationary patterns.
In our setup, we consider the problem of learning a function $f:\InputSpace\to\R$ as a nonstationary Gaussian process. We decompose $f$ as:
\begin{equation}
f(\rawInput) = \expectation[\upperGP\circ\warp(\rawInput)|\upperGP]~,\quad \rawInput\in\InputSpace~,
\label{eq:main}
\end{equation}
where $\circ$ denotes function composition, $\upperGP:\WarpedSpace\to\R$ is a function over a latent space $\WarpedSpace$, and $\warp(\rawInput)$ represents the warped input. If $\upperGP$ has covariance function $k_\upperGP$, the resulting $f$ follows a GP with covariance $k_f(\rawInput, \rawInput') = \expectation[k_\upperGP(\warp(\rawInput), \warp(\rawInput'))]$. The latter constitutes a nonstationary kernel.

To model $\upperGP$ as a stationary GP on $\WarpedSpace$, we propose a formulation for $\warp:\InputSpace\to\WarpedSpace$, which is based on a locally affine stochastic transform:
\begin{equation}
    \warp(\rawInput) = \tfMat(\rawInput)\rawInput + \tfAdd(\rawInput)~,
\end{equation}
where $\tfMat(\rawInput)$ and $\tfAdd(\rawInput)$ are Gaussian processes. Intuitively,  the matrix $\tfMat$ scales the inputs, with a similar effect to what length-scales have on stationary kernels \cite{rasmussen2004gaussian}, but which now varies across the space, while $\tfAdd$ allows for arbitrary shifts. 

The conditional expectation \eqref{eq:main} also corresponds to the composition of a function on $\WarpedSpace$ with a measure \citep{Bauer1981} on $\WarpedSpace$, measure which is actually a function of $\rawInput\in \InputSpace$. In our case, the measure-valued warpings are Gaussian probability measures, which we parametrise as Gaussian process conditioned on \emph{pseudo-training points}. In particular, we use sparse spectrum Gaussian processes \citep{lazaro2010sparse} due to their scalability and availability of closed-form results for Gaussian inputs \citep{pan2017prediction}.

\subsection{Contributions}
\begin{itemize}
    \item We propose a new method to learn nonstationary Gaussian process models via input warping. We introduce the use of a measure-valued, self-supervised and input-dependent warping function as a natural improvement for sparse spectrum Gaussian processes. We term this \textit{sparse spectrum warped input measures} (SSWIM);
    \item We propose a self-supervised training scheme for representing the warping function allowing us to cleanly represent the latent measure valued warping; and
    \item We propose a simple extension to multiple levels of warping by propagating moments.
\end{itemize}

\section{Sparse spectrum Gaussian processes}
We start by reviewing relevant background with regards to kernel methods for Gaussian process regression. In particular, we focus on the sparse spectrum approximation to GPs \citep{lazaro2010sparse}, which we use to formulate nonstationary kernels.

\paragraph{Gaussian processes.}
Suppose our goal is to learn a function $f: \mathbb{R}^D \rightarrow \mathbb{R}$ given IID data $\mathcal{D} = \{\mathbf{x}_i, y_i\}^N_{i=1}$, with each data pair related through
\begin{equation}
    y = f(\mathbf{x}) + \epsilon, \qquad \epsilon \sim \mathcal{N}(0, \sigma_n^2),
\label{regdef}
\end{equation}
where $\epsilon$ is IID additive Gaussian noise. A Gaussian process is a distribution on functions $f$ over an input space  $\mathcal{X} \subseteq \mathbb{R}^D$ such that any finite set of inputs $\mathbf{x}_1,...,\mathbf{x}_N \in \mathcal{X}$ produces a multivariate normal distribution of response variables $\vec f_N := [f(\mathbf{x}_1),...,f(\mathbf{x}_N)]^\transpose$:
\begin{equation}
     \vec f_N \sim \mathcal{N}(\mathbf{m}_N, \mathbf{K}_N)~,
\end{equation}
where $\mathbf{m}_N = m(\mathbf{x}_1,...,\mathbf{x}_N)$ is the mean vector, and $\mathbf{K}_N =\{k(\mathbf{x}_i, \mathbf{x}_j)\}_{i,j}$ with kernel $k$.

\paragraph{Approximate GP in feature space.} \label{ssgpr}
Full GP inference is a challenging problem naively occurring in $\mathcal{O}(N^3)$ complexity as a consequence of having to invert an $(N,N)$ Gram matrix. 
An alternative perspective on approximate GP inference is to consider the \emph{feature space} view of the kernel function using Bochner's theorem \cite{bochner1932vorlesungen}. Under this view, \emph{random Fourier features} \citep{Rahimi2008, Rahimi2009} decompose the kernel function in terms of Fourier features based on a finite approximation to the kernel's spectrum.

As presented by \citet{Rahimi2008}, the Fourier transform of any shift-invariant positive-definite kernel $k:\R^\dimension\times\R^\dimension\to\R$ yields a valid probability distribution $p_k$, so that $k$ is approximately:
\begin{equation}
k(\vec{x},\vec{x}') = \expectation_{\vec{\omega}\sim p_k}[\cos(\vec{\omega}^\transpose(\vec{x}-\vec{x}'))] \approx \vec{\phi}(\vec{x})^\transpose\vec{\phi}(\vec{x}')\,,
\end{equation}

where $\vec{\phi}$ corresponds to the approximate feature map:
\begin{equation}
\begin{split}
\vec\phi(\rawInput) = \sqrt{\frac{2}{M}} \bigg[
&\cos\left(\vec\omega_1^\transpose\rawInput\right), \dots ,\cos\left(\vec\omega_M^\transpose\rawInput \right), \sin\left(\vec\omega_1^\transpose\rawInput \right), \dots ,\sin\left(\vec\omega_M^\transpose\rawInput\right)\bigg]
\in \mathbb{R}^{2M}.
\end{split}
\label{eq:rff1}
\end{equation}
Using the feature map above we are able to define a GP termed the Sparse Spectrum Gaussian Process (SSGP) \cite{lazaro2010sparse}.
Due to feature map \eqref{eq:rff1}, the SSGP is a Gaussian distribution over the feature weights $\vec w \in \mathbb{R}^{2M}$. If we assume the weight prior is $\mathcal{N}(\vec 0, \mat I)$, after conditioning on data $\mathcal{D}$ the posterior distribution of $\weights \sim \mathcal{N}(\vec \alpha,\, \sigma^2_n \mat A^{-1})$, where:
\begin{align} 
        \vec\alpha &= \mat A^{-1}\mat\Phi \vec y,\label{eq:ssgp-w-mean}\\
        \mat A &= \mat \Phi \mat \Phi^\transpose + \sigma^2_n \mat I,\label{eq:ssgp-w-mat}
\end{align}
following from Bayesian Linear Regression \cite{bishop2007pattern}.
The design matrix $\mat\Phi = [\vec\phi(\mathbf{x}_1), ...., \vec\phi(\mathbf{x}_N)]$ and column vector $\vec y = [y_1,...,y_N]^\transpose$ are given directly by the data $\mathcal{D}$. The posterior distribution over the response $y$ given an $\mathbf{x}$ is exactly Gaussian:
\begin{equation} \label{ssgppred}
	p(f(\vec x)|\vec x) = \mathcal{N}\left(\vec \alpha^\transpose\vec\phi(x), \sigma^2_n \norm{\vec\phi(x)}^2_{\mat A^{-1}}\right),
\end{equation}
where we define $\norm{\vec v}^2_{\mat \Sigma} := \vec v^\transpose \mat \Sigma \vec v$. 
Multivariate outputs can be modelled as conditionally independent GPs for each output dimension or jointly by encoding the covariance between the outputs as a vector-valued GP \cite{Alvarez2011}.

\paragraph{Kernels with Gaussian inputs.}
\label{sec:uncertain-pred}
In our formulation of non-stationary kernel, we take form the kernel based on expectations with respect to distributions conditioned on the inputs. In the sparse-spectrum formulation, the expected kernel is simply the result of the inner product between the expected feature map of each input, due to the linearity of expectations. For the case of Gaussian inputs, results by \citet{pan2017prediction} then allow us to compute the expected feature map in closed form. In the cosine case, we have:
\begin{equation}
\mathbb{E}[\cos(\vec{\omega}^\transpose\vec{\tilde{x}})] = \exp\left(-\frac{1}{2}\norm{\vec{\omega}}^2_{\mat\Sigma}\right)\cos(\vec{\omega}^\transpose\vec{\hat{x}})~.
\label{eq:trigexp}
\end{equation}
For the sine components of $\vec\phi$, the result is similar, only swapping cosines by sines above. What is important to note here is the exponential constant out the front which scales the standard feature by a value proportional to the uncertainty in the feature map's kernel frequencies $\vec{\omega}$. That is to say, expectations that take on larger (predictive) uncertainties will be \textit{smaller} than if we did not take this uncertainty into account.

\section{Sparse spectrum warped input measures}
In this section we introduce the main contribution of the paper: \emph{sparse spectrum warped input measures} (SSWIM). The key idea in our work is based on two crucial steps. First, we construct a stochastic vector-valued mapping modelling the input warping $\warp:\InputSpace\to\WarpedSpace$, where $\InputSpace\subseteq\R^\dimension$ represents the raw input space and $\WarpedSpace$ is the resulting warped space. 
A top-level GP modelling $\upperGP:\WarpedSpace\to\R$ then estimates the output of the regression function $f:\InputSpace\to\R$. To learn the warping, each lower-level SSGP is provided with \textit{pseudo-training} points, 
which are learned jointly with the remaining hyper-parameters of both GP models.

It is important to note that the pseudo-training points are \textit{free parameters} of the latent warping function and therefore \textit{hyperparameters of the top-level function}. Furthermore, while our construction and implementation assumes a pseudo-training dimensionality equal to that of the original data $\dim{\InputSpace} = \dim{\WarpedSpace}$, nothing preventing us from embedding the input warping into a lower $\dim \WarpedSpace \ll \dim \InputSpace$ or higher $\dim \WarpedSpace \gg \dim \InputSpace$ dimensional manifold. 

\subsection{Warped input measures}
To model and propagate the uncertainty on the warping operator $\tfMat$ through the predictions, we start by modelling $\tfMat:\InputSpace\to\LinearOpSpace(\InputSpace)$ as a Gaussian process. Then every linear operation on $\tfMat$ results in another GP \cite{Jidling2017}, so that $\warp(\vec{x}) = \tfMat(\vec{x})\vec{x} + \tfAdd(\rawInput)$, for a deterministic $\vec{x}\in\InputSpace$, is Gaussian. Similarly, as expectations constitute linear operations, the \emph{expected value} of the GP $\upperGP$ under the random input given by the warping is also Gaussian \cite{Oliveira2019}. Marginalising $\warp$ out of the predictions, i.e. inferring the expected value of $f$ under the distribution of $\warp$, $\hat{f}(\vec{x}) = \expectation[\upperGP\circ\warp(\vec{x})|\upperGP]$, we end up with a final GP, which has analytic solutions.


From \autoref{sec:uncertain-pred}, the uncertain-inputs predictions from $\hat{\upperGP} = \expectation[\upperGP(\randomInput)|\upperGP]$ for $\warp(\rawInput) \sim \normal(\vecMean{\warp}(\rawInput), \mat\Sigma_\warp(\rawInput))$ are given by the SSGP predictive equations in $\eqref{ssgppred}$ using the expected feature map for $\expectation[\vec\phi(\warp(\rawInput))]$. \autoref{eq:trigexp} then allows us to compute $\expectation[\vec\phi(\warp(\rawInput))]$ in closed form for a given mean $\vecMean{\warp}(\rawInput)$ and covariance matrix $\mat\Sigma_\warp(\rawInput)$.
The general form of the covariance matrix $\mat\Sigma_\warp(\rawInput)$
for $\warp(\vec{x})$ involves dealing with a fourth order tensor describing the second moment of $\tfMat$. For this paper, however, we consider a particular case with a more elegant formulation and yet flexible enough to accommodate for a large variety of warpings.

Let $\tfMat(\vec{x})\vec{x} := \tfVec(\vec{x})\odot\vec{x}$, where $\odot$ denotes the element-wise product and $\tfVec$ is a vector-valued Gaussian process. This type of warping is equivalent to $\tfMat(\rawInput)$ map to a diagonal matrix. The mean and covariance matrix of the warped input $\warp(\vec{x})$, can be derived as (see Appendix for details):
\begin{align}
    \vecMean{\warp}(\rawInput) &= \vecMean{\tfVec}(\rawInput) \odot \rawInput + \vecMean{\tfAdd}(\rawInput)\label{eq:warp-mean}\\
    \mat\Sigma_\warp(\rawInput) &= \rawInput\vec 1^\transpose \odot \mat\Sigma_\tfVec(\rawInput) \odot \vec 1 \rawInput^\transpose + \mat\Sigma_\tfAdd(\rawInput)~,\label{eq:warp-cov}
\end{align}
where $\vecMean{\tfVec}(\rawInput)$ and $\mat\Sigma_{\tfVec}(\rawInput)$ are the predictive mean and covariance, respectively, of the GP defining $\tfVec$, while $\vecMean{\tfAdd}(\rawInput)$ and $\mat\Sigma_{\tfAdd}(\rawInput)$ are the same for the GP on $\tfAdd$.



\begin{algorithm}[t]
\caption{Sparse Spectrum Warped Input Measures}
\begin{algorithmic}

\STATE \textbf{Input:} $\{ \mathbf{X}, \mathbf{y}\}$
\STATE \textbf{Output:} $\vec\theta = \{\vec\theta_{\upperGP}, \vec\theta_{\tfVec}, \vec\theta_\tfAdd, \mat{X}_{\tfVec}, \mat{Y}_{\tfVec}, \mat{X}_{\tfAdd}, \mat{Y}_{\tfAdd} \}$ 
\STATE Initialize pseudo-training points  $\{ \mat{X}_{\tfVec}, \mat{Y}_{\tfVec} \}$, $\{ \mat{X}_{\tfAdd}, \mat{Y}_{\tfAdd} \}$
\FOR{$t \in \{1,\dots,T\}$}
  \STATE Fit $\tfVec$ and $\tfAdd$ to $\{ \mat{X}_{\tfVec}, \mat{Y}_{\tfVec} \}$, $\{ \mat{X}_{\tfAdd}, \mat{Y}_{\tfAdd} \}$
  \STATE Compute $\vecMean{\warp}$ and $\mat\Sigma_\warp$ for $\mat X$
  \STATE Fit $\upperGP$ using expected feature map
  \STATE Calculate $\log p(\mathbf{y} | \vec{\theta})$
  \STATE Update gradients and take new step.
\ENDFOR
\end{algorithmic}
\end{algorithm}

\subsection{Latent self-supervision with pseudo-training}
In order to fully specify our latent function, we utilise \textit{pseudo-training} pairs \{$\mat{X}_{\tfVec}, \mat{Y}_{\tfVec}$\} and $\{\mat{X}_{\tfAdd}, \mat{Y}_{\tfAdd}\}$, somewhat analogous to the well known \textit{inducing-points} framework for sparse Gaussian processes \cite{titsias2009variational}. Conditioning on these virtual observations allows us to implicitly control the Gaussian measure determined by the warping SSGP. 



We model the multiplicative warping $\tfVec: \mathbb{R}^{D} \rightarrow \mathbb{R}^{D}$ using a standard, multi-output, SSGP that is analytically fit on virtual \textit{pseudo-training} points $\{\mat{X}_{\tfVec}, \mat{Y}_{\tfVec} \}$. Assuming coordinate-wise output independence, we model $\tfVec$ as 
$\tfVec(\rawInput) \sim \normal(\vecMean{\tfVec}(\rawInput), \mat\Sigma_\tfVec(\rawInput))$, where:
\begin{align}
    \vecMean{\tfVec}(\rawInput) &= \vec{\phi}_\tfVec(\rawInput)^\transpose \mat A_\tfVec^{-1}\mat\Phi_\tfVec\mat Y_\tfVec\label{eq:g-mean}\\
    \mat\Sigma_\tfVec(\rawInput) &= \sigma_{n,g}^2 \vec{\phi}_\tfVec(\rawInput)^\transpose \mat A_\tfVec^{-1}\vec{\phi}_\tfVec(\rawInput)\mat I~, \label{eq:g-cov}
\end{align}
with $\mat\Phi_\tfVec := \mat\Phi_\tfVec(\mat X_\tfVec)$ as the matrix of Fourier features for the pseudo-inputs $\mat X_\tfVec$, and $\mat A_\tfVec = \mat\Phi_\tfVec\mat\Phi_\tfVec^\transpose + \sigma_{n,g}^2\mat I$. The pseudo-inputs $\mat{X}_{\tfVec}$ are initially sampled uniformly across the data domain, $\mat{X}_{\tfVec} \sim \mathcal{U}(\min(\mathbf{X}), \max(\mathbf{X}))$. The pseudo-training targets $\mat{Y}_{\tfVec}$ are initialised $[\mat{Y}_{\tfVec}]_{i,j} \sim \mathcal{N}(1, \sigma_\gamma^2)$ where $\sigma_\gamma^2$ mimics a prior variance for the latent warping function. The mean at $1$ keeps the initial warping close to identity. 

We adopt a similar construction for the GP on the additive component of the warping $\tfAdd$. However, we initialise the pseudo-training targets $\mat{Y}_\tfAdd$ with zero-mean values $[\mat{Y}_\tfAdd]_{i,j} \sim \normal(0,\sigma^2_\gamma)$, so that we favour a null effect initially. In summary, the complete expected kernel is thus given as:
\begin{equation}
k_f(\rawInput, \rawInput') := \expectation[\ffeature(\vec{\warp(\rawInput)}]^\transpose\expectation[\ffeature(\vec{\warp(\rawInput')})]~,
\end{equation}
where the expectation is taken over $\warp$, whose distribution is recursively defined by equations \ref{eq:warp-mean} to \ref{eq:g-cov}.

\subsection{A layered warping}
We have thus far considered a single warping $\warp$ of the input $\rawInput$. It is natural to ask: \emph{can we warp the warpings?} A simple way to answer this is to revisit how we interpret a single warping: we are transforming the original input space, with which our response varies in a non-stationary way, to a new space a GP with a stationary kernel can easily represent. We could thus intuit a warping of a warping to mean that we are transforming the first level of warping to a second one to which our response variable is simply \emph{more stationary} than if we had just relied on the first warping alone. We present now an extension to SSWIM which lets us perform this measure value warping of a measure valued warping. Let us begin by defining the $J^{\text{th}}$ warping as:
\begin{equation}
\warp^{(J)}(\rawInput^{(J-1)}) = \tfVec^{(J)}(\rawInput^{(J-1)})\odot\rawInput^{(J-1)} + \tfAdd^{(J)}(\rawInput^{(J-1)}),
\label{eq:layered}
\end{equation}
where:
\begin{align}
    \rawInput^{(J-1)} &= \warp^{(J-1)}(\rawInput^{(J-2)}) \quad,~J\geq 2 
\end{align}
While multiplication of a known vector by a Gaussian random matrix keeps Gaussianity, after the first warping layer, the product of two Gaussians is no longer Gaussian in \eqref{eq:layered}. For the layered formulation, we therefore apply moment matching to approximate each layer's warped input as a Gaussian $\rawInput^{(J)} \sim \normal(\vecMean\rawInput^{(J)}, \mat\Sigma^{(J)}_\rawInput)$. Making independence assumptions on \eqref{eq:layered} and applying known results for the Hadamard product of independent random variables \cite{Neudecker1995},  we have:
\begin{align}
    \vecMean\rawInput^{(J)} &= \vecMean\tfVec^{(J)}\odot\vecMean\rawInput^{(J-1)} + \vecMean\tfAdd^{(J)}\\
    \begin{split}
    \mat\Sigma^{(J)}_\rawInput &= \mat\Sigma^{(J-1)}_\rawInput\odot\mat\Sigma^{(J)}_\tfVec + \mat\Sigma^{(J-1)}_\rawInput\odot\vecMean\tfVec^{(J)}\vecMean\tfVec^{(J)\transpose}+\mat\Sigma^{(J)}_\tfVec\odot\vecMean\rawInput^{(J-1)}\vecMean\rawInput^{(J-1)\transpose} + \mat\Sigma_\tfAdd^{(J)}~,
    \end{split}
\end{align}
where $\tfVec^{(J)} \sim \normal(\vecMean{\tfVec}^{(J)}, \mat\Sigma^{(J)}_\tfVec)$ and $\tfAdd^{(J)} \sim \normal( \vecMean\tfAdd^{(J)}, \mat\Sigma_\tfAdd^{(J)})$ are the SSGP predictions using the expected feature map (\autoref{eq:trigexp}) of the previous layer's output $\rawInput^{(J-1)}$.

\subsection{Joint training}
The goal of optimization in learning our warping with uncertainty is to quickly discover hyper-parameters whose models explain the variation in the data. We also want to avoid pathologies that may manifest with an arbitrarily complex warping function. We do this by minimising the model's negative log-marginal likelihood. Given a set of observations $\set{D} = \{\rawInput_i, \observation_i\}_{i=1}^\nObs$, we learn the hyper-parameters $\vec{\theta}$ by minimising the negative log-marginal likelihood:
\begin{equation}
    -\log p(\vec{y}|\vec{\theta}) = \frac{1}{2\sigma^2_n}(\observations^\transpose\observations - \observations^\transpose \mat{\hat{\Phi}}_F^\transpose\mat A_F^{-1}\mat{\hat{\Phi}}_F\observations) + \frac{1}{2}\log\lvert\mat A_F\rvert - \frac{\dimension}{2}\log\sigma_n^2 + \frac{\nFeatures}{2}\log 2 \pi \sigma_n^2~,
\end{equation}
where $\mat{\hat{\Phi}}_F$ is the matrix whose rows to expected feature maps for the top-level SSGP, i.e. $[\mat{\hat{\Phi}}_F]_i = \expectation_{\warp}[\vec\phi_F(\warp(\rawInput_i))]^\transpose$, and $\lvert\mat A_F\rvert$ denotes the determinant of $\mat A_F$. The expectation is taken under the warping $\warp$, whose parameters are computed from the predictive mean and covariance functions of the lower-level GPs (cf. \eqref{eq:warp-mean} and \eqref{eq:warp-cov}), and available in closed form via \autoref{eq:trigexp}.


\begin{figure}[t]
\centering
\includegraphics[width=1.00\linewidth]{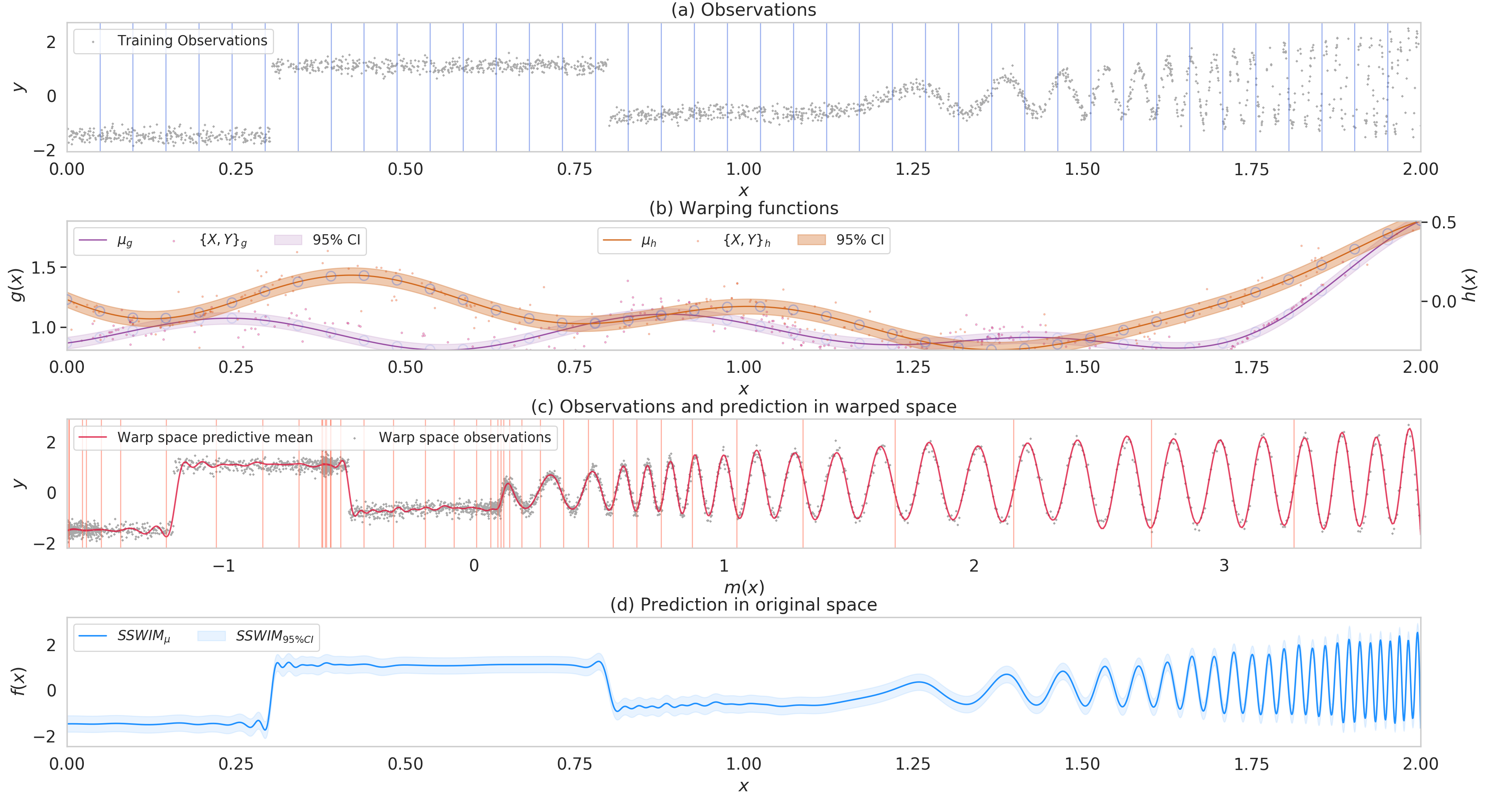}
\caption{Visualisation of SSWIM learning an input warping. (a) Noisy training data. Going left to right, the signal observations exhibit abrupt steps, periodic, and spatial frequency nonstationarity. (b) The learned warping functions. (c) The training data after input warping, and (d) Final prediction with respect to the warped inputs. The key observation here is how the spatially varying frequencies and steps in the original training data from (a) have been transformed to (c) where the warped data varies in a more uniform (stationary) manner. \label{fig:sswim_vis1}}
\end{figure}

\begin{figure*}[!ht]
  \centering
  \includegraphics[width=\linewidth]{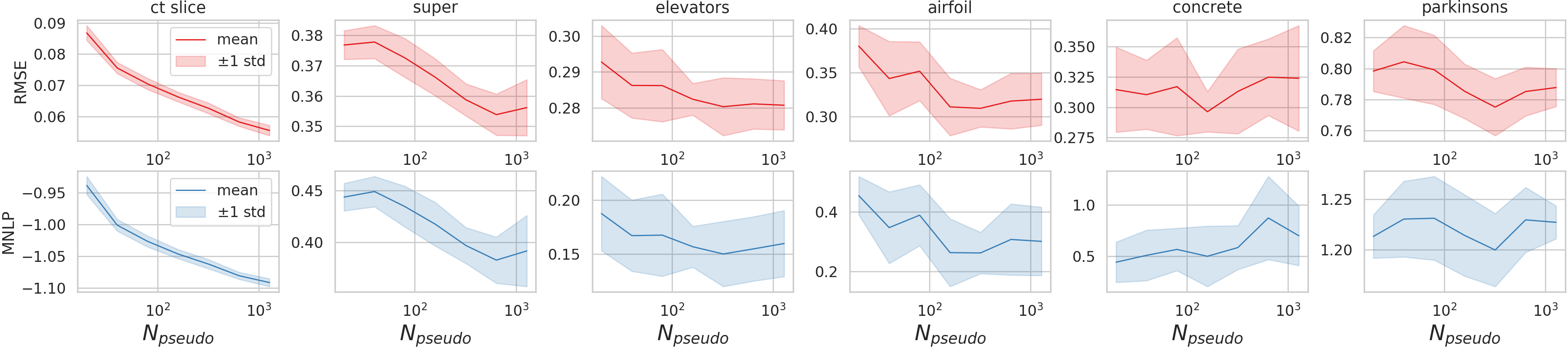}
  \caption{Performance in RMSE and MNLP as the number of pseudo-training points increases.\label{fig:perf_v_pseudo}}
  \vspace{-5pt}
\end{figure*}

\begin{table*}[ht]
\scriptsize
\centering
\caption{RMSE and MNLP metrics for various real world datasets.}
\begingroup
\setlength{\tabcolsep}{2.75pt} 
\renewcommand{\arraystretch}{1.0} 
\begin{tabular}{l|l|cccccccccc}
\toprule
 \multicolumn{1}{c}{$\quad$}&\multicolumn{1}{c}{\textit{\tiny{$(D, N)$}}}          &\multicolumn{1}{c}{\tiny{$(8, 1030)$}}         &\multicolumn{1}{c}{\tiny{$(16, 5875)$}}       &\multicolumn{1}{c}{\tiny{$(15, 17379)$}}         &\multicolumn{1}{c}{\tiny{$(379, 53500)$}}          &\multicolumn{1}{c}{\tiny{$(81,21263)$}}       &\multicolumn{1}{c}{\tiny{$(9,45730)$}}          &\multicolumn{1}{c}{\tiny{$(77,583250)$}}             &\multicolumn{1}{c}{\tiny{$(90,515345)$}}              \\
 \multicolumn{1}{c}{$\quad$}&\multicolumn{1}{c}{Method}            &\multicolumn{1}{c}{concrete}         &\multicolumn{1}{c}{parkinsons}       &\multicolumn{1}{c}{bikeshare}         &\multicolumn{1}{c}{ct slice}          &\multicolumn{1}{c}{supercond}       &\multicolumn{1}{c}{protein}          &\multicolumn{1}{c}{buzz}             &\multicolumn{1}{c}{song}              \\
 \hline\parbox[t]{1mm}{\multirow{8}{*}{\rotatebox[origin=c]{90}{RMSE ($\times 10^{-1}$)}}}  
 &SSWIM$_1$  & 3.05 $\pm$ 0.26  & 7.63 $\pm$ 0.20  & 0.13 $\pm$ 0.04   & 0.46 $\pm$ 0.02   & 3.44 $\pm$ 0.14 & 5.91 $\pm$ 0.07  & 2.98 $\pm$ 0.04  & 8.12 $\pm$ 0.05   \\
 &SSWIM$_2$ &\textbf{ 3.01 $\pm$ 0.31 } & \textbf{7.55 $\pm$ 0.15}  & 0.11 $\pm$ 0.03   & \textbf{0.23 $\pm$ 0.01}   & \textbf{3.02 $\pm$ 0.04} & \textbf{5.80 $\pm$ 0.08}  & \textbf{2.40 $\pm$ 0.01}  & \textbf{7.97 $\pm$ 0.03}   \\
 &DSDGP      & 5.88 $\pm$ 1.24  & 7.94 $\pm$ 0.20  & 0.33 $\pm$ 0.55   & 4.81 $\pm$ 1.18   & 5.10 $\pm$ 0.84 & 5.96 $\pm$ 0.06  & 3.65 $\pm$ 0.75  & 8.46 $\pm$ 0.03   \\
 &DKL       & 3.18 $\pm$ 0.38  & 8.84 $\pm$ 0.74  & 0.24 $\pm$ 0.03   & 0.52 $\pm$ 0.08   & 3.46 $\pm$ 0.18 & 7.15 $\pm$ 1.10  & 4.11 $\pm$ 3.33  & 16.66 $\pm$ 8.14  \\
 &RFFNS      & 3.46 $\pm$ 0.24  & 8.15 $\pm$ 0.15  & 0.05 $\pm$ 0.01  & 4.39 $\pm$ 0.27   & 3.85 $\pm$ 0.05 & 6.87 $\pm$ 0.06  & 5.70 $\pm$ 0.84  & 8.35 $\pm$ 0.03   \\
 &SVGP       & 3.32 $\pm$ 0.26  & 8.14 $\pm$ 0.12  & 0.06 $\pm$ 0.03   & 1.16 $\pm$ 0.02   & 4.06 $\pm$ 0.05 & 7.32 $\pm$ 0.08  & 9.98 $\pm$ 0.02  & 12.19 $\pm$ 0.18  \\
 &SGPR      & 5.55 $\pm$ 0.58  & 7.86 $\pm$ 0.22  & 0.67 $\pm$ 0.18   & 1.79 $\pm$ 0.04   & 4.27 $\pm$ 0.06 & 6.45 $\pm$ 0.07  & 2.89 $\pm$ 0.02  & 8.40 $\pm$ 0.04   \\
 &RFFS       & 3.33 $\pm$ 0.30  & 8.24 $\pm$ 0.17  & \textbf{0.03 $\pm$ 0.00}   & 2.34 $\pm$ 0.05   & 3.89 $\pm$ 0.06 & 6.91 $\pm$ 0.07  & 3.78 $\pm$ 0.14  & 8.36 $\pm$ 0.04   \\
 \midrule \parbox[t]{1mm}{\multirow{8}{*}{\rotatebox[origin=c]{90}{MNLP ($\times 10^{-1}$)}}} 
 &SSWIM$_1$ & 10.22 $\pm$ 4.15 & 11.95 $\pm$ 0.47 & -11.89 $\pm$ 0.15 & -11.24 $\pm$ 0.05 & 3.55 $\pm$ 0.32 & 8.95 $\pm$ 0.12  & 2.03 $\pm$ 0.13  & 12.08 $\pm$ 0.05  \\
 &SSWIM$_2$ & 5.19 $\pm$ 2.59  & 12.50 $\pm$ 0.44 & -11.78 $\pm$ 0.07 & \textbf{-11.79 $\pm$ 0.02} & \textbf{2.82 $\pm$ 0.29} & \textbf{8.82 $\pm$ 0.15}  & \textbf{-0.09 $\pm$ 0.04} & \textbf{11.93 $\pm$ 0.04}  \\
 &DSDGP     & 11.02 $\pm$ 1.06 & \textbf{11.91 $\pm$ 0.24} & -23.28 $\pm$ 8.29 & 6.62 $\pm$ 2.61   & 7.36 $\pm$ 1.62 & 9.04 $\pm$ 0.10  & 3.80 $\pm$ 2.02  & 12.52 $\pm$ 0.04  \\
 &DKL        & 7.69 $\pm$ 0.20  & 13.17 $\pm$ 1.12 & 6.82 $\pm$ 0.01   & 6.83 $\pm$ 0.01   & 7.76 $\pm$ 0.10 & 11.02 $\pm$ 1.53 & 9.01 $\pm$ 4.65  & 42.64 $\pm$ 44.77 \\
 &RFFNS     & 3.31 $\pm$ 0.45  & 12.18 $\pm$ 0.18 & -11.97 $\pm$ 0.00 & 5.95 $\pm$ 0.66   & 4.66 $\pm$ 0.12 & 10.39 $\pm$ 0.08 & 8.78 $\pm$ 1.87  & 12.39 $\pm$ 0.04  \\
 &SVGP       & \textbf{2.83 $\pm$ 0.56}  & 12.21 $\pm$ 0.14 & \textbf{-27.70 $\pm$ 1.24} & -5.98 $\pm$ 0.13  & 5.32 $\pm$ 0.12 & 11.10 $\pm$ 0.09 & 63.31 $\pm$ 3.44 & 18.02 $\pm$ 0.09  \\
 &SGPR      & 8.48 $\pm$ 2.10  & 12.39 $\pm$ 0.30 & -13.67 $\pm$ 0.98 & -3.14 $\pm$ 0.26  & 5.58 $\pm$ 0.10 & 10.05 $\pm$ 0.13 & 1.11 $\pm$ 0.12  & 11.97 $\pm$ 0.07  \\
 &RFFS      & 3.05 $\pm$ 0.96  & 12.29 $\pm$ 0.20 & -11.98 $\pm$ 0.00 & -0.33 $\pm$ 0.22  & 4.79 $\pm$ 0.13 & 10.45 $\pm$ 0.09 & 4.41 $\pm$ 0.37  & 12.40 $\pm$ 0.05  \\
\bottomrule
\end{tabular}
\endgroup
\label{table:experiments_main}
\end{table*}

\section{EXPERIMENTS}
We experimentally validate SSWIM alongside various state of the art methods in both small and large data regimes as well as expand upon the intuition in \autoref{sec:analysis1interpret} by examining specific aspects of the model. Section \ref{sec:analysis2pseudo} analyses computational complexity and model performance with respect to the pseudo-training points. We investigate increasing the number of warping levels in \autoref{sec:analysis3layers}. The large scale comparison alongside various algorithms is presented in \autoref{sec:experiments_main}.

For every quantitative experiment, we report the mean and standard deviation over ten repeats. Metrics are presented in the standardized data scale. In all experiments the Matern $\frac{3}{2}$ is used as the base kernel. For performance evaluation we use the test Root Mean Square Error (RMSE) and test Mean Negative Log Probability (MNLP). These are defined as RMSE = $\sqrt{\langle (y_{*j} - \mu_{*j})^2  \rangle}$ and MNLP = $\frac{1}{2}\langle (\frac{y_{*j} - \mu_{*j}}{\sigma_{*j}})^2 + \log\sigma^2_{*j} + \log2\pi \rangle$ where  $y_{*j}$ is the true test value, and $\mu_{*j}$ and $\sigma^2_{*j}$ are the predictive mean and variance respectively for the $j^{\text{th}}$ test observation. Mean is denoted as $\langle \cdot \rangle$. 

\subsection{Analysis}\label{sec:analysis1all}
\subsubsection{Inductive bias and a geometric interpretation}\label{sec:analysis1interpret}

An intuitive interpretation of SSWIM is by imagining it as learning a conditional affine transformation. The quintessential affine transformation of some vector $x$ is described as $\mathbf{A}x + b$ for some multiplication matrix $\mathbf{A}$ and addition vector $b$. Such transformations are typically interpreted geometrically \cite{gonzalez2008digital} as \textit{translation}, \textit{rotation}, \textit{reflection}, \textit{scale} and \textit{shear}. SSWIM learns a \textit{conditional} affine map that \textit{depends on the input}. I.e. $\mathbf{A}$ and $b$ become maps $\mathbf{A}(x)$ and $b(x)$. By directly applying a learned warping to the original input data we transform the inputs into a locally Euclidean manifold which ultimately preserves any structure with respect to the input resulting in a convenient inductive bias. Observe in \autoref{fig:sswim_vis1} (c) how we have non-uniformly "stretched out" out the left and rightmost parts of the original data in (a) to produce a new warped dataset. What was original spatially nonstationary becomes spatially homogeneous resulting excellent prediction as in \autoref{fig:sswim_vis1} (d).

\subsubsection{How many pseudo-training points?}\label{sec:analysis2pseudo}
To understand the overall sensitivity of our method we visualise the predictive performance as a function of the number of pseudo-training points. \autoref{fig:perf_v_pseudo} shows performance, in log scale, with respect to the number of pseudo-training points on real world datasets. We observe trending improvement however very few pseudo-targets are required to get excellent performance, even in much higher dimensional problems like  \textit{superconductivity} ($D=81$) and \textit{ct slice} ($D=379$), suggesting that there is significant expressiveness in the underlying warping function. 

We remark that a possible drawback of pseudo-training points and fitting a stochastic model over those points is the question of how to set the prior of their locations. Furthermore, how do we initialise them? To answer this, it is natural to set $\tfMat$ and $\tfAdd$ to be fit to noisy pseudo-targets with mean $\mathbf{I}$ and $\mathbf{0}$ respectively. This has a nice interpretation as the matrices corresponding to the identity operations of an affine transformation.


\subsubsection{Increased warping depth}\label{sec:analysis3layers}
\begin{figure*}[ht]
  \centering
  \includegraphics[width=\linewidth]{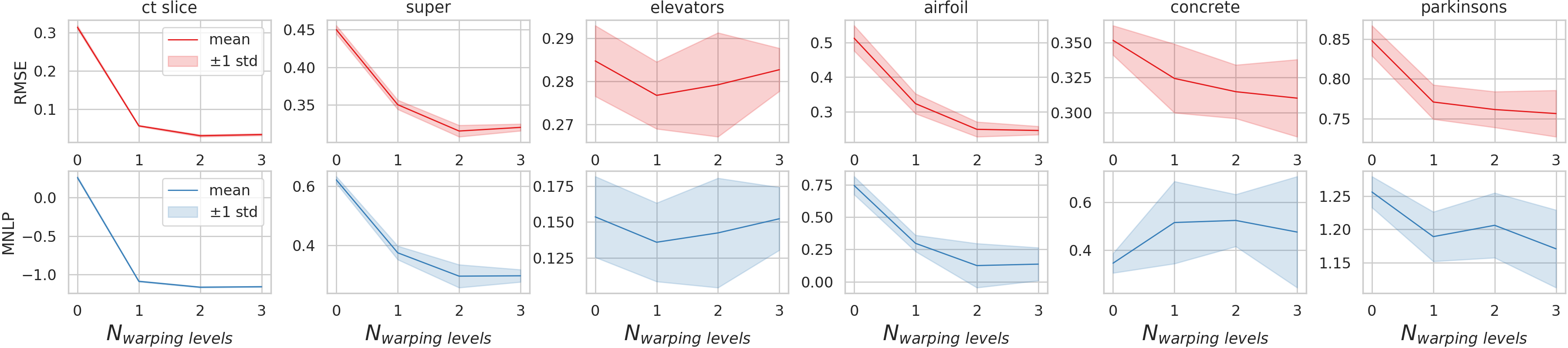}
  \caption{Performance in RMSE and MNLP as the number of warping levels increases. \label{fig:perf_v_layers}}
  \vspace{-5pt}
\end{figure*}
In this experiment we evaluate the predictive performance of SSWIM as we increase the number of levels of consecutive input warping from 0 to 3. A depth of 0 simply corresponds to the \textit{stationary} SSGPR specification. \autoref{fig:perf_v_pseudo} summarises the results. For all the datasets we can see that adding just a single level of input warping increases predictive performance. Adding additional levels of warping seems to consistently improve performance however it adds additional variance to all results which could be explained by the additional complexity required for optimization.

\textbf{Computational complexity}. 
The top-level function $\upperGP$ and two warping warping $\tfMat$ and $\tfAdd$ all inherit the same computational complexity of SSGP and SSGP with predictions under uncertainty \cite{pan2017prediction} which is $\mathcal{O}(nm^2 + m^3)$ for $n$ samples and $m$ features. For multiple warping levels this cost is simply multiplied by the number levels $J$ therefore the overall complexity remains $\mathcal{O}(nm^2 + m^3)$. In practice $m$ is very small with $m < 1000$. For SSWIM, the dimensionality of a single pseudo-training point is $D$ which is the same dimensionality of the raw input $\rawInput$. Therefore $\tfMat$ and $\tfAdd$ consist of $D \times N_{\tfMat}$ and $D \times N_{\tfAdd}$ pseudo-training points respectively. The functions $\upperGP$, $\tfMat$ and $\tfAdd$ contain model and kernel hyperparameters of size $|\vec\theta_{\upperGP}|$, $|\vec\theta_{\tfMat}|$ and $|\vec\theta_{\tfAdd}|$ respectively which should each not exceed much more than $D$ for conventional stationary kernels. 

\subsection{Real datasets}\label{sec:experiments_main}
We compare our model on various real-world datasets including multiple regression tasks \cite{Dua:2019, ltorgo:2019, cole2000modelling}. All datasets are standardised using the train set. We use $\frac{2}{3}$ of the samples for training and the remaining $\frac{1}{3}$ for testing. We compare multiple related algorithms alongside our proposed method SSWIM using both one level of warping (SSWIM$_1$) and two levels of warping (SSWIM$_2$), Deep Kernel Learning \cite{wilson2016deep} (DKL), SSGP with stationary random fourier features kernel (RFFS), SSGP with nonstationary kernel features (RFFNS) with freely variable mean and width for the Matern $\frac{3}{2}$ spectral frequencies \cite{remes2017non, ton2018spatial}, Sparse Gaussian Process Regression (SGPR) \cite{titsias2009variational}, Sparse Variational Gaussian Process (SVGP) \cite{hensman2014scalable}, and Doubly Stochastic Deep GP with 2 layers (DSDGP) \cite{salimbeni2017doubly}. All experiments were performed on a Linux machine with a single Titan V GPU. We ran all methods for 150 iterations with stochastic gradient descent and the library GPyTorch was used for DKL, DSDGP, SGPR, and SVGP. We have provided implementations for RFFS, RFFNS, and SSWIM. Full experimental details are provided in the supplementary material with additional commentary. PyTorch Code is provided to reproduce the experimental results. 

In the main experimental results given in \autoref{table:experiments_main} we can observe a consistent high performance across all datasets for SSWIM in all tasks for the RMSE metric. For \textit{concrete, parkinsons} and \textit{bikeshare} SSWIM is outperformed in MNLP by DSDGP, SVGP and RFFS suggesting they were more capable of representing the predictive distribution rather than the mean. For the remaining datasets SSWIM has performed extremely well, most notably on the high dimensional problem \textit{ct slice}. SSWIM$_2$ with two levels of warping comprehensively outperforms other methods as well as SSWIM$_1$ which also performs competitively. These results further corroborate the analysis results given in \autoref{fig:perf_v_pseudo} and \autoref{fig:perf_v_layers}.

 \subsection{Related work}
Foundational work \cite{higdon1999non, paciorek2004nonstationary} on kernel based nonstationarity necessitated manipulation of the kernel function with expensive inference procedures. 
Recent spectral representation of kernel functions have emerged with Bochner's theorem \cite{bochner1932vorlesungen}. In this paradigm, one constructs kernels in the Fourier domain via \emph{random Fourier features} (RFFs) \cite{Rahimi2008, Rahimi2009} and extensions for nonstationarity via the generalised Fourier inverse transform \cite{samo2015generalized, remes2017non, ton2018spatial, sun2018differentiable}. While general, these methods suffer from various drawbacks such as expensive computations and overfitting due to over-parameterised models \cite{ton2018spatial}.

More expressive modelling frameworks  \cite{calandra2016manifold, wilson2012gaussian, sampson1992nonparametric, anderes2008estimating} have played a major role in expanding the efficacy of kernel based learning. Perhaps the most well known in the recent literature is Deep Kernel Learning \citet{wilson2016deep} and the \emph{deep Gaussian process} \cite{damianou2013deep} and heretofore its various extensions \cite{salimbeni2017doubly, cutajar2017random, bui2016deep}. While functionally elegant, methods like DKL and DGP often rely on increasing the complexity of the composition to produce expressiveness and are often unsuitable or unwieldy in practice occasionally resulting in performance worse than stationary inducing point GPs \cite{salimbeni2017doubly}. We remark a notable difference between DGP and SSWIM is one should interpret our pseudo-training points as hyperparameters of the kernel as opposed to parameters of a variational approximation.   

Simple bijective input warpings were considered in \cite{snoek2014input} for transforming nonstationary functions into more well behaved functions. In \cite{heinonen2016non} the authors augment the standard GP model by learning nonstationary data dependent functions for the \textit{hyperparameters} of a nonstationary squared exponential kernel \cite{gibbs1997bayesian} however is limited to low dimensions. More recently, the work of \cite{hegde2019deep} has explored a dynamical systems view of input warpings by processing the inputs through a time dependent differential fields. Less related models presented in \citet{Wang2012, Dutordoir2018cde, snelson2004warped} involve \textit{output warping} non-Gaussian likelihoods and heteroscedastic noise. For the curious reader we examine contrasting properties of output  and input warping in the supplementary material.
 
\section{Conclusion}
We have proposed a crucial advance to the sparse spectrum Gaussian process framework to account for nonstationarity through a novel input warping formulation. We introduced a novel form of input warping analytically incorporating complete Gaussian measures in the functional warping with the concept of \textit{pseudo-training} data and latent \textit{self-supervision}. We have further extended this core contribution with the necessary results to extend the warping to multiple levels resulting in higher levels of model expressiveness.

Experimentally, the methodology we propose has demonstrated excellent results in the total number of hyperparameters for various low and high dimensional real world datasets when compared to deterministic and neural network based approaches but also performing exceptionally well in contrast to deep Gaussian processes. Our model suggests an interesting and effective inductive bias this is nicely interpreted as a learned conditional affine transformation. This perspectives invites a fresh take on how we can discover more effective representations of nonstationary data.


\newpage
\section*{Broader Impact}

The problem we address in this paper of efficient modelling of nonstationary stochastic processes is fundamental in geostatistics, time-series analysis, and the study of dynamical systems. To this end, our technique is directly applicable to spatial-temporal problems such as air pollution monitoring, the spread of diseases, and the study of natural resources such as underground water. In all of these problems, the strength of the spatial relationships between inputs varies with respect to the location. For example, during the current pandemic, nearby cities might exhibit different levels of infection rates within their boundaries but still being related due to infected people travelling between them~\citet{senanayake2016predicting}. Our approach is directly applicable to these cases and can be incorporated within epidemiological models that typically aggregate populations in large regions for a more refined prediction and study of intervention policies such as social distancing.





\medskip
\small
\bibliographystyle{unsrtnat}
\bibliography{main}

\end{document}


\maketitle

\appendix

\section{Mathematical derivations}

\begin{lemma}
\label{thr:warp-uncetainty}
Let $\anyvector \sim \normal(\vecMean{\anyvector}, \mat\Sigma_\anyvector)$ denote a Gaussian random vector and $\rawInput\in\R^\dimension$ an arbitrary point. Then $\othervector := \anyvector\odot\rawInput$ is Gaussian, $\othervector\sim\normal(\vecMean{\othervector}, \mat\Sigma_\othervector)$, with mean and covariance matrix given by:
\begin{align}
    \vecMean{\othervector} &= \vecMean{\anyvector}\odot\rawInput\\
    \mat\Sigma_\othervector &= \rawInput\vec{1}^\transpose\odot\mat\Sigma_\anyvector\odot\vec{1}\rawInput^\transpose
\end{align}
\end{lemma}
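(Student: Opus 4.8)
The plan is to recognize the Hadamard product against a fixed vector as a linear map and then invoke the standard closure of the Gaussian family under affine transformations. First I would rewrite $\othervector = \anyvector\odot\rawInput = \operatorname{diag}(\rawInput)\,\anyvector$, where $\operatorname{diag}(\rawInput)$ is the $\dimension\times\dimension$ diagonal matrix carrying the entries of $\rawInput$ on its diagonal; componentwise this reads $z_i = x_i v_i$, so the two descriptions of $\othervector$ genuinely coincide. Because $\rawInput$ is a fixed (deterministic) point, $\operatorname{diag}(\rawInput)$ is a constant matrix, and hence $\othervector$ is a linear image of the Gaussian vector $\anyvector$.

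Next I would apply the affine-transformation rule for Gaussians: if $\anyvector\sim\normal(\vecMean{\anyvector},\mat\Sigma_\anyvector)$ and $\mat{A}$ is a constant matrix, then $\mat{A}\anyvector\sim\normal(\mat{A}\vecMean{\anyvector},\,\mat{A}\mat\Sigma_\anyvector\mat{A}^\transpose)$. Taking $\mat{A}=\operatorname{diag}(\rawInput)$ immediately yields Gaussianity of $\othervector$ together with $\vecMean{\othervector}=\operatorname{diag}(\rawInput)\vecMean{\anyvector}=\vecMean{\anyvector}\odot\rawInput$, which is the claimed mean, and $\mat\Sigma_\othervector=\operatorname{diag}(\rawInput)\,\mat\Sigma_\anyvector\,\operatorname{diag}(\rawInput)$, using that a diagonal matrix is symmetric so $\operatorname{diag}(\rawInput)^\transpose=\operatorname{diag}(\rawInput)$.

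The remaining step—and the only place any real care is needed—is to check that this diagonal-sandwich form of the covariance agrees with the Hadamard expression stated in the lemma. I would verify this entrywise: the $(i,j)$ entry of $\operatorname{diag}(\rawInput)\mat\Sigma_\anyvector\operatorname{diag}(\rawInput)$ is $x_i(\mat\Sigma_\anyvector)_{ij}x_j$, while the matrix $\rawInput\vec{1}^\transpose$ has $(i,j)$ entry $x_i$ and $\vec{1}\rawInput^\transpose$ has $(i,j)$ entry $x_j$, so the triple Hadamard product $\rawInput\vec{1}^\transpose\odot\mat\Sigma_\anyvector\odot\vec{1}\rawInput^\transpose$ likewise has $(i,j)$ entry $x_i(\mat\Sigma_\anyvector)_{ij}x_j$. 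The two forms therefore coincide, completing the argument. I expect this bookkeeping with the outer-product matrices $\rawInput\vec{1}^\transpose$ and $\vec{1}\rawInput^\transpose$ to be the main (though minor) obstacle, since everything else is a direct application of the standard affine-transformation property.
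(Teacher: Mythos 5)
Your proof is correct, but it takes a genuinely different route from the paper's. You represent the Hadamard product with the fixed vector as left-multiplication by the constant diagonal matrix $\operatorname{diag}(\rawInput)$ and then invoke the standard closure of Gaussians under affine maps, reducing the lemma to the entrywise identification $\bigl(\operatorname{diag}(\rawInput)\,\mat\Sigma_\anyvector\,\operatorname{diag}(\rawInput)\bigr)_{ij} = x_i(\mat\Sigma_\anyvector)_{ij}x_j = \bigl(\rawInput\vec{1}^\transpose\odot\mat\Sigma_\anyvector\odot\vec{1}\rawInput^\transpose\bigr)_{ij}$. The paper instead stays entirely within Hadamard-product algebra: it uses symmetry and bilinearity of $\odot$ plus the identity $\vec a(\vec b\odot\vec c)^\transpose = \vec a\vec b^\transpose\odot\vec{1}\vec c^\transpose$ to expand $\variance[\othervector] = \expectation[((\anyvector-\vecMean{\anyvector})\odot\rawInput)((\anyvector-\vecMean{\anyvector})\odot\rawInput)^\transpose]$ directly into the stated form. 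Your version has the advantage of making the Gaussianity of $\othervector$ an immediate consequence of the affine-transformation rule --- a point the paper's proof leaves implicit, since it only computes the first two moments --- at the cost of a final bookkeeping step to translate the diagonal-sandwich covariance into the outer-product Hadamard notation. The paper's version produces the stated formula in its exact notational form without that translation, but relies on a less standard Hadamard identity that it asserts rather than verifies.
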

\begin{proof}
The element-wise vector product, which is the Hadamard product for single-column matrices, is symmetric and linear, i.e. $\vec a \odot \vec b = \vec b \odot \vec a$ and $\vec a \odot (\vec b + \vec c) = \vec a \odot \vec b + \vec a \odot \vec c$, for any $\vec a, \vec b, \vec c \in \R^\dimension$. By the linearity of the expectation, we then have $\expectation[\othervector] = \expectation[\anyvector\odot\rawInput] = \expectation[\anyvector]\odot\rawInput$. From the properties of the Hadamard product, one can also show that $\vec a (\vec b \odot \vec c)^\transpose = \vec a \vec b^\transpose \odot \vec{1}\vec c^\transpose$, where $\vec 1$ is a vector of 1's. Therefore, the covariance matrix $\mat\Sigma_\othervector := \variance[\othervector] =  \expectation[(\othervector-\vecMean{\othervector})(\othervector-\vecMean{\othervector})^\transpose]$, is given by:
\begin{equation}
    \begin{split}
        \variance[\othervector] &= \expectation[((\anyvector-\vecMean{\anyvector})\odot\rawInput)((\anyvector-\vecMean{\anyvector})\odot\rawInput)^\transpose]\\
    &=\rawInput\vec 1^\transpose\odot\expectation[(\anyvector-\vecMean{\anyvector})(\anyvector-\vecMean{\anyvector})^\transpose]\odot \vec 1 \rawInput^\transpose\\
    &= \rawInput \vec 1^\transpose \odot \variance[\anyvector]\odot \vec 1 \rawInput^\transpose~,
    \end{split}
\end{equation}
which concludes the proof.
\end{proof}

\section{Relationship to output warped GPs} 
\label{sec:experiments_outputwarp}
\begin{table}[ht]
\small
\centering
\caption{MSE and MNLP metrics for comparison with Warped and Bayesian Warped GPs \cite{lazaro2012bayesian}. MSE results for \textit{ailerons} are $\times 10^{-8}$. }
\begingroup
\setlength{\tabcolsep}{4.5pt} 
\renewcommand{\arraystretch}{1.0} 
\begin{tabular}{l|l|ccc}
\toprule
 \multicolumn{1}{c}{}&\multicolumn{1}{c}{Method}     &\multicolumn{1}{c}{abalone}         &\multicolumn{1}{c}{creep}         &\multicolumn{1}{c}{ailerons}                      \\\hline\parbox[t]{1mm}{\multirow{6}{*}{\rotatebox[origin=c]{90}{ 
MSE }}}  
   &GP        & 4.55 $\pm$ 0.14 & 584.9 $\pm$ 71.2   & 2.95 $\pm$ 0.16                        \\
 &BWGP      & 4.55 $\pm$ 0.11 & 491.8 $\pm$ 36.2   & 2.91 $\pm$ 0.14                          \\
 &MLWGP3    & 4.54 $\pm$ 0.10 & 502.3 $\pm$ 43.3   & \textbf{2.80 $\pm$ 0.11}             \\
 &MLWGP20   & 4.59 $\pm$ 0.32 & 506.3 $\pm$ 46.1   & 3.42 $\pm$ 2.87                           \\
 &SSWIM$_1$ & 4.64 $\pm$ 0.13 & 483.69 $\pm$ 64.12 & 2.96 $\pm$ 0.08                          \\
 &SSWIM$_2$ & \textbf{4.50 $\pm$ 0.11} & \textbf{279.86 $\pm$ 31.88} & 2.83 $\pm$ 0.06          \\
 \midrule \parbox[t]{1mm}{\multirow{6}{*}{\rotatebox[origin=c]{90}{MNLP}}} 
   &GP      & 2.17 $\pm$ 0.01 & 4.46 $\pm$ 0.03 & -7.30 $\pm$ 0.01              \\
 &BWGP           & 1.99 $\pm$ 0.01 & 4.31 $\pm$ 0.04 & -7.38 $\pm$ 0.02              \\
 &MLWGP3          & \textbf{1.97 $\pm$ 0.02} & \textbf{4.21 $\pm$ 0.03 }& -7.44 $\pm$ 0.01 \\
 &MLWGP20      & 1.99 $\pm$ 0.05 & \textbf{4.21 $\pm$ 0.08 }& \textbf{-7.45 $\pm$ 0.08}              \\
 &SSWIM$_1$     & 2.18 $\pm$ 0.01 & 4.45 $\pm$ 0.03 & -7.24 $\pm$ 0.01              \\
 &SSWIM$_2$      & 2.17 $\pm$ 0.02 & 4.27 $\pm$ 0.03 & -7.00 $\pm$ 0.02              \\
\bottomrule
\end{tabular}
\label{table:experiments_outputwarp}
\endgroup
\end{table}

A body of work has previously been developed under the title of \textit{warped} Gaussian processes \cite{snelson2004warped}. As noted, this contrasts from our modelling problem because while we proceed to expand the GP's capabilities to warp the \textit{inputs}, the WGP and extensions warps explicitly the \textit{output} distribution of the Gaussian process. We now juxtapose the efficacy of our input warping formulation with WGP by applying SSWIM to the three challenging datasets \textit{abalone}, \textit{creep}, and \textit{ailerons} experimented upon in \cite{snelson2004warped, lazaro2012bayesian}. 

Our results in Table \autoref{table:experiments_outputwarp} suggest that with SSWIM we are able to improve upon both WGP and BWGP in MSE: marginal improvements for \textit{abalone} and a signficant improvement for \textit{creep} with comparable performance in \textit{ailerons}. Contrasting this, the output warping methods outperform unanimously on the MNLP metric. This is expected because output warping may allow one to capture non-gaussian conditional distributions which an input warping formulation cannot with a standard Gaussian process. The discussion we wish to raise here is that both aspects of manipulating the inputs and outputs of a GP can result in major improvements respectively across different metrics.

\section{Additional Experiments}
\subsection{Increasing number of pseudo-training points}
For the "increasing number of pseudo-training points" experiment we used 1 layer of warping with 256 features for both the warping and top-level predictive functions. 

\subsection{Increasing warping depth}
We used 256 features and 1280 pseudo-training points for all of the experiments. 

\subsection{Complete real-dataset experiments table}
\autoref{table:experiments_main} contains additional real-world experiments to extend the majore experimental results from the main paper.

\begin{table*}[ht]
\scriptsize
\centering
\caption{RMSE and MNLP metrics for various real world datasets.}
\begingroup
\setlength{\tabcolsep}{2.75pt} 
\renewcommand{\arraystretch}{1.0} 
\begin{adjustbox}{center}
\begin{tabular}{l|l|cccccccccc}
\toprule
 \multicolumn{1}{c}{$\quad$}&\multicolumn{1}{c}{\textit{\tiny{$(D, N)$}}}     &\multicolumn{1}{c}{\tiny{$(18, 8751)$}}         &\multicolumn{1}{c}{\tiny{$(5, 1503)$}}         &\multicolumn{1}{c}{\tiny{$(8, 1030)$}}         &\multicolumn{1}{c}{\tiny{$(16, 5875)$}}       &\multicolumn{1}{c}{\tiny{$(15, 17379)$}}         &\multicolumn{1}{c}{\tiny{$(379, 53500)$}}          &\multicolumn{1}{c}{\tiny{$(81,21263)$}}       &\multicolumn{1}{c}{\tiny{$(9,45730)$}}          &\multicolumn{1}{c}{\tiny{$(77,583250)$}}             &\multicolumn{1}{c}{\tiny{$(90,515345)$}}              \\
 \multicolumn{1}{c}{$\quad$}&\multicolumn{1}{c}{Method}     &\multicolumn{1}{c}{elevators}         &\multicolumn{1}{c}{airfoil}         &\multicolumn{1}{c}{concrete}         &\multicolumn{1}{c}{parkinsons}       &\multicolumn{1}{c}{bikeshare}         &\multicolumn{1}{c}{ct slice}          &\multicolumn{1}{c}{supercond}       &\multicolumn{1}{c}{protein}          &\multicolumn{1}{c}{buzz}             &\multicolumn{1}{c}{song}              \\
 \hline\parbox[t]{1mm}{\multirow{8}{*}{\rotatebox[origin=c]{90}{RMSE ($\times 10^{-1}$)}}}  
 &SSWIM$_1$ & 2.83 $\pm$ 0.08   & 2.38 $\pm$ 0.26 & 3.05 $\pm$ 0.26  & 7.63 $\pm$ 0.20  & 0.13 $\pm$ 0.04   & 0.46 $\pm$ 0.02   & 3.44 $\pm$ 0.14 & 5.91 $\pm$ 0.07  & 2.98 $\pm$ 0.04  & 8.12 $\pm$ 0.05   \\
 &SSWIM$_2$ & \textbf{2.74 $\pm$ 0.08}   & \textbf{2.35 $\pm$ 0.22} &\textbf{ 3.01 $\pm$ 0.31 } & \textbf{7.55 $\pm$ 0.15}  & 0.11 $\pm$ 0.03   & \textbf{0.23 $\pm$ 0.01}   & \textbf{3.02 $\pm$ 0.04} & \textbf{5.80 $\pm$ 0.08}  & \textbf{2.40 $\pm$ 0.01}  & \textbf{7.97 $\pm$ 0.03}   \\
 &DSDGP     & \textbf{2.74 $\pm$ 0.06}   & 4.30 $\pm$ 0.19 & 5.88 $\pm$ 1.24  & 7.94 $\pm$ 0.20  & 0.33 $\pm$ 0.55   & 4.81 $\pm$ 1.18   & 5.10 $\pm$ 0.84 & 5.96 $\pm$ 0.06  & 3.65 $\pm$ 0.75  & 8.46 $\pm$ 0.03   \\
 &DKL       & 3.06 $\pm$ 0.29   & 3.19 $\pm$ 0.37 & 3.18 $\pm$ 0.38  & 8.84 $\pm$ 0.74  & 0.24 $\pm$ 0.03   & 0.52 $\pm$ 0.08   & 3.46 $\pm$ 0.18 & 7.15 $\pm$ 1.10  & 4.11 $\pm$ 3.33  & 16.66 $\pm$ 8.14  \\
 &RFFNS     & 2.83 $\pm$ 0.07   & 3.31 $\pm$ 0.37 & 3.46 $\pm$ 0.24  & 8.15 $\pm$ 0.15  & 0.05 $\pm$ 0.01  & 4.39 $\pm$ 0.27   & 3.85 $\pm$ 0.05 & 6.87 $\pm$ 0.06  & 5.70 $\pm$ 0.84  & 8.35 $\pm$ 0.03   \\
 &SVGP      & 2.88 $\pm$ 0.10   & 2.70 $\pm$ 0.15 & 3.32 $\pm$ 0.26  & 8.14 $\pm$ 0.12  & 0.06 $\pm$ 0.03   & 1.16 $\pm$ 0.02   & 4.06 $\pm$ 0.05 & 7.32 $\pm$ 0.08  & 9.98 $\pm$ 0.02  & 12.19 $\pm$ 0.18  \\
 &SGPR      & 4.96 $\pm$ 2.02   & 4.24 $\pm$ 0.40 & 5.55 $\pm$ 0.58  & 7.86 $\pm$ 0.22  & 0.67 $\pm$ 0.18   & 1.79 $\pm$ 0.04   & 4.27 $\pm$ 0.06 & 6.45 $\pm$ 0.07  & 2.89 $\pm$ 0.02  & 8.40 $\pm$ 0.04   \\
 &RFFS      & 2.87 $\pm$ 0.10   & 3.28 $\pm$ 0.24 & 3.33 $\pm$ 0.30  & 8.24 $\pm$ 0.17  & \textbf{0.03 $\pm$ 0.00}   & 2.34 $\pm$ 0.05   & 3.89 $\pm$ 0.06 & 6.91 $\pm$ 0.07  & 3.78 $\pm$ 0.14  & 8.36 $\pm$ 0.04   \\
 \midrule \parbox[t]{1mm}{\multirow{8}{*}{\rotatebox[origin=c]{90}{MNLP ($\times 10^{-1}$)}}} 
 &SSWIM$_1$ & 1.81 $\pm$ 0.55   & 1.25 $\pm$ 2.50 & 10.22 $\pm$ 4.15 & 11.95 $\pm$ 0.47 & -11.89 $\pm$ 0.15 & -11.24 $\pm$ 0.05 & 3.55 $\pm$ 0.32 & 8.95 $\pm$ 0.12  & 2.03 $\pm$ 0.13  & 12.08 $\pm$ 0.05  \\
 &SSWIM$_2$ & 1.65 $\pm$ 0.63   & \textbf{1.05 $\pm$ 1.74 }& 5.19 $\pm$ 2.59  & 12.50 $\pm$ 0.44 & -11.78 $\pm$ 0.07 & \textbf{-11.79 $\pm$ 0.02} & \textbf{2.82 $\pm$ 0.29} & \textbf{8.82 $\pm$ 0.15}  & \textbf{-0.09 $\pm$ 0.04} & \textbf{11.93 $\pm$ 0.04}  \\
 &DSDGP     & \textbf{1.16 $\pm$ 0.21}   & 9.19 $\pm$ 0.20 & 11.02 $\pm$ 1.06 & \textbf{11.91 $\pm$ 0.24} & -23.28 $\pm$ 8.29 & 6.62 $\pm$ 2.61   & 7.36 $\pm$ 1.62 & 9.04 $\pm$ 0.10  & 3.80 $\pm$ 2.02  & 12.52 $\pm$ 0.04  \\
 &DKL       & 7.57 $\pm$ 0.15   & 7.70 $\pm$ 0.17 & 7.69 $\pm$ 0.20  & 13.17 $\pm$ 1.12 & 6.82 $\pm$ 0.01   & 6.83 $\pm$ 0.01   & 7.76 $\pm$ 0.10 & 11.02 $\pm$ 1.53 & 9.01 $\pm$ 4.65  & 42.64 $\pm$ 44.77 \\
 &RFFNS     & 1.44 $\pm$ 0.24   & 2.74 $\pm$ 1.20 & 3.31 $\pm$ 0.45  & 12.18 $\pm$ 0.18 & -11.97 $\pm$ 0.00 & 5.95 $\pm$ 0.66   & 4.66 $\pm$ 0.12 & 10.39 $\pm$ 0.08 & 8.78 $\pm$ 1.87  & 12.39 $\pm$ 0.04  \\
 &SVGP      & 1.78 $\pm$ 0.30   & 1.19 $\pm$ 0.33 & \textbf{2.83 $\pm$ 0.56}  & 12.21 $\pm$ 0.14 & \textbf{-27.70 $\pm$ 1.24} & -5.98 $\pm$ 0.13  & 5.32 $\pm$ 0.12 & 11.10 $\pm$ 0.09 & 63.31 $\pm$ 3.44 & 18.02 $\pm$ 0.09  \\
 &SGPR      & 11.59 $\pm$ 22.38 & 6.45 $\pm$ 0.47 & 8.48 $\pm$ 2.10  & 12.39 $\pm$ 0.30 & -13.67 $\pm$ 0.98 & -3.14 $\pm$ 0.26  & 5.58 $\pm$ 0.10 & 10.05 $\pm$ 0.13 & 1.11 $\pm$ 0.12  & 11.97 $\pm$ 0.07  \\
 &RFFS      & 1.59 $\pm$ 0.26   & 2.72 $\pm$ 0.69 & 3.05 $\pm$ 0.96  & 12.29 $\pm$ 0.20 & -11.98 $\pm$ 0.00 & -0.33 $\pm$ 0.22  & 4.79 $\pm$ 0.13 & 10.45 $\pm$ 0.09 & 4.41 $\pm$ 0.37  & 12.40 $\pm$ 0.05  \\
\bottomrule
\end{tabular}
\end{adjustbox}
\endgroup
\label{table:experiments_main}
\end{table*}

\subsection{Extended discussion}
It is imperative to note here our aim is not to demand any algorithmic dominance when comparing methods. Firstly, this is a fruitless pursuit due to the diversity and dependence of all advanced GP methods on hyperparameter optimization, and secondly it is not a constructive approach to the communal development of new methodologies to claim some benchmark task superiority. Rather, encouraged by the results of this paper, we invite the discussion to move beyond stationary kernels and inquire upon and interpret the effectiveness of new Gaussian process methodologies through the \textit{more general perspective} of input space warping for unearthing hidden nonstationary patterns within data.

\subsection{Overfitting analysis}
We ran an overfitting analysis of $SSWIM_1$ to observe the effect of over-optimising with respect to the marginal likelihood. We ran with 256 features, 1280 pseudo-training points, for 150 steps, with 10 repeats, and evaluated the test RMSE and test MNLP on the test set for every single epoch of optimisation. This test bench is provided in the supplementary code. The results are averaged with mean and standard deviations of the training curves displayed in \autoref{fig:analysis_overfitting}. We can see that we are quite resistant to overfitting except for RMSE in the \textit{elevators} dataset and the MNLP in the \textit{concrete} and \textit{parkinsons} datasets. The causes of this could be explained by the underlying flexibility of the proposed method which allows the model to become overconfident in what it has learned with respect to the data it has observed. In fact, we observed similar overfitting behaviour for similar optimisation periods with DKL and DSDGP. 

This analysis leads to some interesting observations and recommendations for future algorithm development in more expressive GP methodologies: 1. the marginal likelihood is no panacea although it is easy to think it is, and 2. other loss functions and training schemes, such as leave-one-out cross validation. Actually, these results corroborate long known discussions from \cite{rasmussen2004gaussian} about the risk of overfitting from trusting the marginal likelihood with standard optimisation procedures, however their importance seems to have been largely ignored in evaluation of recent methodology innovations in the GP literature. We believe that a more open discussion should be on the table for analysing the interplay between model expressiveness and the effect this has on overfitting; this is especially pertinent to the GP literature which has placed a large emphasis on the importance of the marginal likelihood has a valid hyperparameter optimisation loss.

\begin{figure*}[ht]
  \centering
  \includegraphics[width=1.0\textwidth]{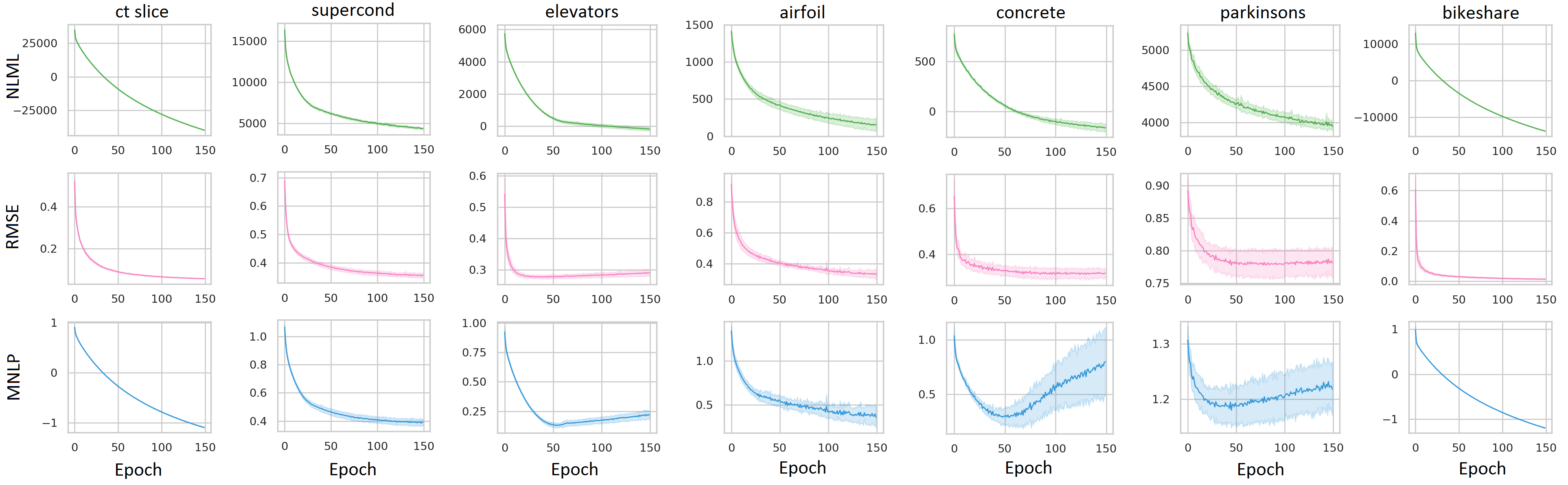}
  \caption{Empirical analysis of overfitting behaviour in SSWIM\label{fig:analysis_overfitting}}
\end{figure*}

\subsection{Pseudo-training points in $2D$}
\begin{figure*}[ht]
  \centering
  \includegraphics[width=1.0\textwidth]{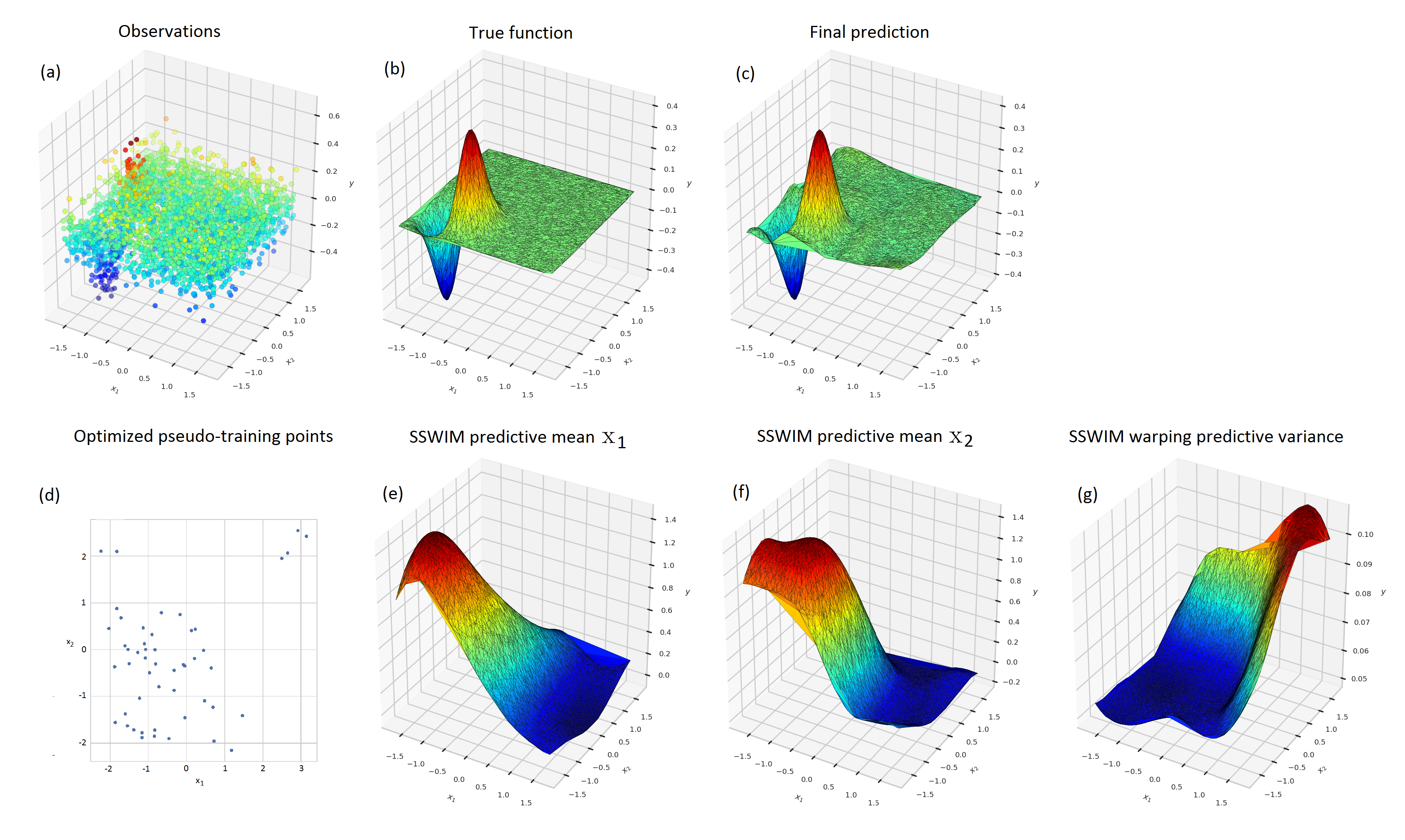}
  \caption{Visualisation of learning spatial positioning of pseudo training points in 2D. We demonstrate spatial nonstationarity with the exponential 2D function from \cite{gramacy2005bayesian}. (a) Noisy training data, (b) True function surface, (c) SSWIM prediction conditioned on training data, (d) Learned pseudo-training point positions, (e) Learned warping predictive mean for $x_1$, (f) Learned warping predictive mean for $x_2$, (g) Learned warping predictive variance.\label{fig:gramacy_vis}}
\end{figure*}

\autoref{fig:gramacy_vis} contains an interpretation of input warping and the pseudo-training points in higher dimensions. We use the exponential 2D function from \cite{gramacy2005bayesian} as a case where it is intuitive to observe how SSWIM reacts to topological differences in the underlying function. The function consists of a mostly flat surface with abrupt spiking occurring at a corner of the domain as seen in \autoref{fig:gramacy_vis} (b). If we were to assume a homogeneous domain, and model our data with a standard SSGP with stationary RBF kernel, the kernel's hyperparameters would be optimised to provide a homogeneous representation resulting in conflict between the spiked area in the corner and flat areas elsewhere. By directly manipulating the input domain with our warping, we are able to transform the input domain with a continuous warping that consequently allows accurate representation as seen in \autoref{fig:gramacy_vis} (c).

The intuition and utility of our proposed \textit{pseudo-training} as virtual training points is visualised from a birds-eye view in \autoref{fig:gramacy_vis} (d). Before optimisation, we initialise these points uniformly across the domain of the training space. It is clear that the learned positions of the pseudo-training data have transformed their spatial locations away from uniform. At the bottom left they appear to have clustered near the discontinuity of the test function while in the remaining corners of the space the points have spread away. The predictive mean of the learned warping function is thus visualised across the domain, for both $x_1$ and $x_2$, in \autoref{fig:gramacy_vis} (e) and (f). Furthermore, \autoref{fig:gramacy_vis} (g) shows the predictive variance of the warping function and we can see how a lower amount of spatial uncertainty arises both from the noisy pseudo-targets as well as the pseudo-inputs from (d).

\section{Datasets and Experimental Conditions}
For completeness, we specify for each dataset used, the data dimensionality and sample size, the raw data source, the modelling objective (i.e. the target) as defined for the original problem, and any target variable pre-processing excluding standardisation. The dimensionality $D$ is reported for the inputs $X$ (i.e. excluding the target variable $y$). All problems are single output regression tasks. Number of samples $N$ is reported for the entire dataset before train/test splitting is applied. Note that we do not alter the raw data files and pre-processing is applied through code exactly as in the provided supplementary code. \textit{dataloader.py}.

\subsubsection*{elevators}
$D: 18$\\
$N: 8751$\\
\textit{Source:} \url{https://web.archive.org/web/*/http://www.liacc.up.pt/~ltorgo/Regression/*}\\
\textit{Preprocessing: } None \\
\textit{Target: } goal

\subsubsection*{airfoil}
$D: 5$\\
$N: 1503$\\
\textit{Source:} \url{https://archive.ics.uci.edu/ml/datasets/Airfoil+Self-Noise}\\
\textit{Preprocessing: } None \\
\textit{Target: } Sound pressure in decibels

\subsubsection*{concrete}
$D: 8$\\
$N: 1030$\\
\textit{Source:} \url{https://archive.ics.uci.edu/ml/datasets/concrete+compressive+strength}\\
\textit{Preprocessing: } None \\
\textit{Target: } Compressive Strength

\subsubsection*{parkinsons}
$D: 16$\\
$N: 5875$\\
\textit{Source:} \url{https://archive.ics.uci.edu/ml/datasets/Parkinsons+Telemonitoring}\\
\textit{Preprocessing: } Drop the first 5 columns as they are not used in the original problem \\
\textit{Target: } Total udpr

\subsubsection*{bikeshare}
$D: 15$\\
$N: 17379$\\
\textit{Source:} \url{https://archive.ics.uci.edu/ml/datasets/Bike+Sharing+Dataset}\\
\textit{Preprocessing: } None \\
\textit{Target: } Number of bike shares per hour 

\subsubsection*{ct slice}
$D: 379$\\
$N: 53500$\\
\textit{Source:} \url{https://archive.ics.uci.edu/ml/datasets/Relative+location+of+CT+slices+on+axial+axis}\\
\textit{Preprocessing: } Drop Patient ID. Drop columns which have constant value throughout entire dataset. \\
\textit{Target: } Reference (relative location)

\subsubsection*{supercond}
$D: 81$\\
$N: 21263$\\
\textit{Source:} \url{https://archive.ics.uci.edu/ml/datasets/Superconductivty+Data}\\
\textit{Preprocessing: } None \\
\textit{Target: } Critical Temperature

\subsubsection*{protein}
$D: 9$\\
$N: 45730$\\
\textit{Source:} \url{https://archive.ics.uci.edu/ml/datasets/Physicochemical+Properties+of+Protein+Tertiary+Structure}\\
\textit{Preprocessing: } $\log(1+y)$ transform for target $y$ \\
\textit{Target: } RMSD

\subsubsection*{buzz}
$D: 77$\\
$N: 583250$\\
\textit{Source:} \url{https://archive.ics.uci.edu/ml/datasets/Buzz+in+social+media+#}\\
\textit{Preprocessing: } $\log(1+y)$ transform of target $y$ \\
\textit{Target: } Mean Number of Active Discussion (NAD)

\subsubsection*{song}
$D: 90$\\
$N: 515345$\\
\textit{Source:} \url{https://archive.ics.uci.edu/ml/datasets/Buzz+in+social+media+#}\\
\textit{Preprocessing: } None \\
\textit{Target: } Year of song release

\subsubsection*{abalone}
$D: 9$\\
$N: 4177$\\
\textit{Source:} \url{https://web.archive.org/web/*/http://www.liacc.up.pt/~ltorgo/Regression/*}\\
\textit{Preprocessing: } None \\
\textit{Target: } Number of Rings

\subsubsection*{creep}
$D: 30$\\
$N: 2066$\\
\textit{Source:} \url{http://www.phase-trans.msm.cam.ac.uk/map/data/materials/creeprupt-b.html#down}\\
\textit{Preprocessing: } None \\
\textit{Target: } Rupture stress

\subsubsection*{ailerons}
$D: 40$\\
$N: 7154$\\
\textit{Source:} \url{https://web.archive.org/web/*/http://www.liacc.up.pt/~ltorgo/Regression/*}\\
\textit{Preprocessing: } None \\
\textit{Target: } goal

\newpage
\medskip
\small
\bibliographystyle{ieeetr}
\bibliography{bibli}